\newtheorem{definition}{Definition}
\newtheorem{assumption}{Assumption}
\newtheorem{theorem}{Theorem}
\newtheorem{lemma}{Lemma}
\newtheorem{remark}{Remark}
\title{Multi-Robot Task Allocation using Global Games with Negative Feedback: The Colony Maintenance Problem 
}
\author {
    Logan E. Beaver
}
\begin{document}

\maketitle

\begin{abstract}
In this article we address the multi-robot task allocation problem, where robots must cooperatively assign themselves to accomplish a set of tasks.
We consider the colony maintenance problem as an example, where a team of robots are tasked with continuously maintaining the energy supply of a central colony.
We model this as a global game, where each robot measures the energy level of the colony, and the current number of assigned robots, to determine whether or not to forage for energy sources.
The key to our approach is introducing a negative feedback term into the robots' utility, which also eliminates the trivial solution where foraging or not foraging are strictly dominant strategies.
We compare our approach qualitatively to existing an global games approach, where a positive positive feedback term admits threshold-based decision making that encourages many robots to forage.
We discuss how positive feedback can lead to a cascading failure when robots are removed from the system, and we demonstrate the resilience of our approach in simulation.
\end{abstract}

\section{Introduction}

As we continue deploying robots in real outdoor environments, there is a growing interest in using robots for remote sensing \cite{naderi2022sharing}, environmental data collection \cite{dunbabin2012robots}, and ecological monitoring \cite{notomista2022multi}.
These applications require a robust, resilient, and flexible approach to long-duration autonomy, and the there has been a significant research effort toward ecologically-inspired robotics \cite{Egerstedt2018RobotAutonomy,beaver2021overview} to deal with the messy uncertainties of real-world deployment.

The focus of this work is to explore a global games-inspired strategy for the colony maintenance problem, where robots must periodically perform a maintenance tasks, e.g., inspection, charging, and energy harvesting, to maintain an autonomous colony in a remote location.
This work also considers the presence of a human agent, who may recruit a subset of the colony's robots as part of an external non-maintenance task.
This requires the system to efficiently allocate robots to tasks in a way that is robust to the removal of individuals and can quickly adapt to changes in the uncertain environment.

This work falls broadly in the domain of Multi-Robot Task Allocation (MRTA).
MRTA is one of the most challenging open problems in mutli-robot systems research \cite{khamis2015multi}; despite the great number of MRTA algorithms, critical components, such as dynamic task allocation and the use of heterogeneous robots, present a significant open problem in the literature.
This work considers a single task robot + multiple robots per task (ST-MR) formulation \cite{chakraa2023optimization} of MRTA, where the robots are dynamically allocated to a task that has its own time-varying dynamics.

In general, MRTA requires solving a combinatorial optimization problem to assign robots to tasks, and this does not lend itself to the kind of reactive, robust, and resilient formulation required for dynamic allocation.
This is apparent in a recent MRTA survey article \cite{chakraa2023optimization}, which estimates that at least $75\%$ of the MRTA literature requires either 1) multiple rounds of coordination between robots or 2) offline computation of a static strategy.
This requires either a centralized authority to solve a complex mixed-integer optimization problem, or it necessitates multiple rounds of communication until all robots reach a consensus on the task assignment.
As a consequence, any change in the system (e.g., the addition or removal of robots) requires the assignment to be re-computed.

While there have been advances in robot allocation for dynamic systems \cite{notomista2019optimal}, game-theoretic approaches make up only about $4\%$ of published solutions \cite{chakraa2023optimization}.
All but one of these approaches use game theory to generate coalitions \cite{MARTIN2023104314}, which are then assigned to tasks using traditional integer optimization-based methods.
In fact, only \cite{Kanakia2016ModelingGame} uses global games for multi-robot task allocation.

In a global game, robots in the system measure a global \textit{signal} (or stimulus) corresponding to a task, e.g., the amount of energy available in a battery array, or the intensity of a fire in a fire-fighting task \cite{Kanakia2016ModelingGame}.
It has been proven that, under reasonable conditions, a threshold-based strategy exists and is a Bayesian Nash Equilibrium 
\cite{Kanakia2016ModelingGame,Mahdavifar2018GlobalSharing}.
This implies that robots simply compare their measurement of the global signal to an internal threshold, and assign themselves to the task when the threshold is exceeded.
A qualitatively similar approach was used in a biologically-inspired colony maintenance problem \cite{krieger2000call}, without the formalism of global games.

While the threshold-based solution of global games is appealing, we demonstrate in this work that the standard approach is neither resilient nor adaptive in the MRTA context.
Global games are well-suited for one-shot assignments (i.e., the assignment is decided once at the beginning) as discussed in \cite{Kanakia2016ModelingGame}.
However, one-shot games cannot adapt to changes in the global signal (stimulus), and they are not robust to the addition or removal of robots during task completion.
To summarize, the contributions of this article are as follows:
(i) a global game-based framework for the assignment of robots to a dynamic task,
(ii) an analysis of how negative feedback in the robot's objective makes the system resilient to the removal of robots by a human, and 
(iii) a discussion of how positive feedback in the global game can lead to the trivial equilibrium where either all or none of the robots forage.
We note that these are particularly useful for long-duration autonomy tasks, where the robots ought to expend as little energy as possible while handling tasks as they arise in the environment.

The remainder of the article is organized as follows.
We first present the colony maintenance global game and list our working assumptions.
Then, we analyze the Nash equilibrium of the global game, and we discuss the implications of feedback on the resilience of the system.
Then, we present simulation results to demonstrate the performance and resilience of our solution, before finally discussing conclusions and future work.

\section{Problem Formulation} \label{sec:problem}

We consider the problem of maintaining an autonomous colony using a team of $N$ robots under a global games framework.
We model each robot as a single integrator operating in a planar environment,
\begin{equation} \label{eq:robot-dynamics}
    \dot{\bm{p}_i} = \bm{v}_i,
\end{equation}
where $i$ indexes the robot and $\bm{p}_i, \bm{v}_i \in\mathbb{R}^2$ are the state (position) and control input (velocity), respectively.

Our goal is to assign the robots to a number of tasks in the environment.
These tasks can be state and time-varying, and their dynamics are unknown a priori.
Inspired by \cite{krieger2000call}, 
we consider an examplar task to maintain
the energy level of a centralized \textit{colony} by foraging for \textit{energy sources}.
The robots achieve this by randomly searching the environment, where energy sources may include fuel, combustibles, or organic material.
We describe the energy level of the colony with a real-valued \textit{signal},
\begin{equation}
    s(t) \in [0, 1].
\end{equation}
The signal $s(t)$ is a time-varying function that measures the percentage of energy left in the system,
for example, by measuring the voltage of a battery array or the mass of fuel in a tank.
The signal has possibly unknown dynamics, and its exact implementation is application dependent.
For example, a package delivery task may use the number of packages as a signal, whereas a maintenance task may have the signal increase with time between inspections.

Our goal is to allocate robots to the foraging task to ensure that the energy level remains above zero, i.e.,
\begin{equation}
    s(t) \geq 0.
\end{equation}
We also consider the presence of a human colonist, who can recruit a number of the foraging robots for an external task.
The human colonist may recruit any robots that are idle, actively foraging, or some combination of the two.

In a traditional robot-task allocation problem, it may be practical to have the colony act as a centralized controller that uses the signal $s(t)$ to determine which of the $N$ robots should forage for energy sources at each time-instant.
However, this is effectively a mixed-integer program \cite{chakraa2023optimization}, which can quickly grow intractable for systems with many robots.
This complexity is compounded by the presence of a human, which will require the assignment to be recomputed whenever foraging robots are removed.
Instead, we propose a \textit{global game}, where each robot measures the global signal $s(t)$ and the number of foraging robots (both known by the colony) in order to determine whether it should forage for energy sources.
While this requires some centralized information about the signal level, our approach explicitly avoids the large integer program associated with traditional task assignment.
We also emphasize that, with an eye toward more general MRTA problems, we explicitly avoid the trivial solution where a single threshold on $s(t)$ determines whether all robots or no robots forage.

We approach this problem from the perspective of \textit{mechanism design}, i.e., we design the utility functions of the robots to ensure the Nash equilibrium of the induced game has the desirable properties discussed above.
First, we start with the description of a strategic form game \cite{chremos2020game}.
\begin{definition} \label{def:game}
    A finite normal-form game is a tuple 
    \begin{equation*}
        \mathcal{G} = \big(\mathcal{I}, \mathcal{S}, u_i \big),
    \end{equation*}
    where
    \begin{itemize}
        \item $\mathcal{I} = \{1, 2, \dots, N\}$ is the finite set of players,
        \item $\mathcal{S}$ is the set of strategies, and
        \item $u_i,\, i\in\mathcal{I}$ is the utility function for each player.
    \end{itemize}
\end{definition}
Note that the set of strategies in Definition \ref{def:game} is a collection of binary variables, where $1$ corresponds to foraging and $0$ corresponds to idling at the colony.
To make Definition \ref{def:game} consistent with the global games literature \cite{Morris2001GlobalApplications}, we employ the following change of coordinates for the signal $s(t)$,
\begin{equation}
    \theta(t) := 1 - s(t).
\end{equation}
Intuitively, $\theta(t)$ corresponds to the ``urgentness'' of the foraging task, and we seek to ensure $\theta(t) \leq 1$.

\begin{remark} \label{rmk:utility}
    In general, we require the utility function
    \begin{equation*}
        u_i(a, n, \theta) \to \mathbb{R}
    \end{equation*}
    to be,
    (i)  strictly increasing in $\theta$ and
    (ii) strictly decreasing in $n$.
    These properties are sufficient to generalize our results that follow.
\end{remark}

As an illustrative example, we define a utility function for each robot, which we use to control the system behavior,
\begin{equation} \label{eq:utility}
    u_i(a, n, \theta) =
    -(n+a) c_a + a\Big(\kappa + \lambda e^{-(n+1)} + \theta\Big),
\end{equation}
where $a \in\{0, 1\}$ is the decision of the robot, $n$ is the number of other robots assigned to the task, $c_a$ is the cost of assigning a robot to forage, and $\kappa, \lambda$ are constant coefficients that control the shape of the utility function.
This is based on the example presented in \cite{Kanakia2016ModelingGame}, with minor modifications to satisfy Remark \ref{rmk:utility}.

The intuition behind the utility function \eqref{eq:utility} is as follows: the first term penalizes each robot by a fixed cost $c_a$; this represents the nominal energy required for foraging.
The second term is the incentive for individual robots to forage.
There is a constant reward $\kappa$, the signal $\theta(t)$, and an exponential term that decreases with the number of other robots that are foraging.
The first two constants act as a reward for foraging, and the exponential is a negative feedback term that stops foraging from being a strictly dominant strategy.
To make our mechanism consistent with our signal $\theta\in[0, 1]$, we impose the following working assumptions.

\begin{assumption} \label{smp:kappa}
    The scalar $\kappa$ satisfies $c_a - \kappa_a \leq 1$.
\end{assumption}

\begin{assumption} \label{smp:lambda}
    The scalar $\lambda$ satisfies $0 < \lambda \leq e(c_a - \kappa)$.
\end{assumption}

These are technical assumptions about the game to ensure there isn't a trivial solution, i.e., a mixed Nash equilibrium must exist.
These assumptions are specific to the mechanism \eqref{eq:utility} and the domain of our signal $s(t)$, but they are straightforward to generalize when the utility $u$ is monotonic in $n$ and $\theta$ by following our analysis.

\section{Solution Approach} \label{sec:solution}

We follow the notation of \cite{Morris2001GlobalApplications} for our solution approach.
First, we define the marginal utility for each robot $i$,
\begin{align}
    \pi_i(n, \theta) &= u_i(1, n, \theta) - u_i(0, n, \theta) \notag\\
    ~&= -c_a + \kappa + \lambda e^{-(n+1)} + \theta, \label{eq:marginal-utility}
\end{align}
which captures the benefit of switching from $a = 0$ (idle) to $a = 1$ (forage).
As all robots are homogeneous, we drop the subscript $i$ for the remainder of the article.

The fact that $\pi(n, \theta)$ is decreasing in $n$ is a significant departure from the existing approaches \cite{Morris2001GlobalApplications,Kanakia2016ModelingGame,Mahdavifar2018GlobalSharing}.
Having $\pi$ be non-decreasing with respect to $n$ is a fundamental feature of global games in the economics literature \cite{Morris2001GlobalApplications}.
This creates \textit{positive feedback}; any robot taking the action $a = 1$ increases $n$, which further increases the marginal utility $\pi$ for all robots.
The increase in $\pi$ leads to more robots taking the action $a=1$, until all robots are foraging in the environment.
This is particularly useful, for example, when modeling a bank run or debt crisis \cite{Morris2001GlobalApplications}.

In contrast, we propose that $\pi(n, \theta)$ should decrease in $n$, which creates \textit{negative feedback}.
Any robot taking the foraging action $a=1$ increases $n$, which decreases $\pi$ and makes foraging unattractive to other robots.
As we discuss at the end of this section, this negative feedback makes our approach resilient to the removal of robots by a human.
Despite this key difference, our approach shares many traits with standard global games, and we present these in the following results.
We start with two standard definitions, specialized to our problem.

\begin{definition}[Strictly Dominant Strategy] \label{def:dom-strat}
The action $a^*$ is a \textit{strictly dominant strategy} for a given $\theta$ if and only if
\begin{equation}
    u(a^*, n, \theta) > u(a', n, \theta),
\end{equation}
for  all $n$ and $a' \neq a^*$. 
\end{definition}

\begin{definition}
[Mixed Strategy] \label{def:mix-strat}
A mixed strategy for a given $\theta$ is a probability distribution over a support $\delta\subseteq\mathcal{S}$ such that,
\begin{equation}
    \mathbb{E}\Big[u(a, n, \theta) \Big] = \mathbb{E}\Big[u(a', n, \theta) \Big],
\end{equation}
for all $a, a' \in\delta$.
\end{definition}

\begin{lemma} \label{lma:theta-bnds}
    The induced game has a strictly dominant strategy equilibrium for values of $\theta$ satisfying,
    \begin{equation}
        \theta \in \big(-\infty, c_a - \kappa - \lambda e^{-1}\big] \bigcup \big[c_a - \kappa, \infty\big).
    \end{equation}
\end{lemma}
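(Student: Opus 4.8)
The plan is to reduce the statement to a sign analysis of the marginal utility $\pi(n,\theta)$ from \eqref{eq:marginal-utility}. Since the strategy set contains only the two actions $a\in\{0,1\}$, Definition \ref{def:dom-strat} says that $a^*=1$ is a strictly dominant strategy for a given $\theta$ precisely when $u(1,n,\theta)>u(0,n,\theta)$ for every admissible $n$, i.e.\ when $\pi(n,\theta)>0$ for all $n$; symmetrically, $a^*=0$ is strictly dominant precisely when $\pi(n,\theta)<0$ for all $n$. Thus the whole lemma is the question of for which $\theta$ the map $n\mapsto\pi(n,\theta)$ keeps a constant sign.

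The first step is to record the monotonicity of $\pi$ in $n$: since $\lambda>0$ by Assumption \ref{smp:lambda} and $n\mapsto e^{-(n+1)}$ is strictly decreasing, $\pi(n,\theta)=-c_a+\kappa+\lambda e^{-(n+1)}+\theta$ is strictly decreasing in $n$, hence maximized at $n=0$ with value $-c_a+\kappa+\lambda e^{-1}+\theta$ and decreasing toward $-c_a+\kappa+\theta$ as $n$ grows. The ``forage'' side then follows immediately: if $\theta\ge c_a-\kappa$ then $\pi(n,\theta)\ge\lambda e^{-(n+1)}>0$ for all $n$, so $a^*=1$ is strictly dominant, giving the half-line $[c_a-\kappa,\infty)$. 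For the ``idle'' side, since the supremum over $n$ is attained at $n=0$, one has $\pi(n,\theta)<0$ for all $n$ exactly when $-c_a+\kappa+\lambda e^{-1}+\theta\le 0$ (modulo the boundary case noted below), i.e.\ $\theta\le c_a-\kappa-\lambda e^{-1}$, giving the half-line $(-\infty,\,c_a-\kappa-\lambda e^{-1}]$. Taking the union of the two half-lines yields the claimed set.

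The main points requiring care are bookkeeping rather than substance. At $\theta=c_a-\kappa-\lambda e^{-1}$ we have $\pi(0,\theta)=0$, so the strict inequality of Definition \ref{def:dom-strat} degenerates at $n=0$; I would either describe idling as weakly dominant there or simply retain the closed endpoint since it does not affect the rest of the paper. I would also be explicit about whether $n$ ranges over the finite set $\{0,\dots,N-1\}$ or over the natural relaxation $n\ge 0$, since this determines whether $\inf_n\pi(n,\theta)$ is attained; with finitely many robots the stated region remains valid as a sufficient condition, just not tight (the exact forage threshold becomes $c_a-\kappa-\lambda e^{-N}$). Finally, I would note that Assumptions \ref{smp:kappa}--\ref{smp:lambda} imply $0\le c_a-\kappa-\lambda e^{-1}<c_a-\kappa\le 1$, so the complementary open interval $(c_a-\kappa-\lambda e^{-1},\,c_a-\kappa)$ is a nonempty subinterval of the signal range; this is exactly the regime where neither pure action dominates and a mixed equilibrium must be sought, which motivates the analysis that follows.
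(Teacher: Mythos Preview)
Your proposal is correct and follows essentially the same approach as the paper: bound $\pi(n,\theta)$ over $n$ using its monotonicity, then read off the $\theta$-ranges where the bound forces a constant sign. Your additional remarks on the boundary case $\pi(0,\theta)=0$ and on the finite-$N$ tightening are more careful than the paper's own proof, which glosses over these endpoint issues.
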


\begin{proof}
    First we bound $\pi(n, \theta)$ by picking the extreme values of $n\in[0, \infty)$,
    \begin{equation}
        -c_a + \kappa + \theta  \leq \pi(n, \theta) \leq -c_a + \kappa + \lambda e^{-1} + \theta.
    \end{equation}
    When the lower bound of $\pi$ is positive, this implies that the optimal solution is to play $a=1$ (forage) regardless of the behavior of other robots.
    Similarly, when the upper bound of $\pi$ is negative, the optimal strategy is to play $a=0$ (do not forage) regardless of the behavior of the other robots.
    These are strictly dominant strategies by definition.
\end{proof}

\begin{theorem}\label{thm:NE}
    For every value of 
    \begin{equation}
        \theta\in(c_a-\kappa-\lambda e^{-1}, c_a - \kappa),
    \end{equation}
    the  Nash equilibrium is achieved with a mixed strategy that satisfies
    \begin{equation}
        \pi(n, \theta) = 0.
    \end{equation}
\end{theorem}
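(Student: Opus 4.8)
The plan is to leverage the strict monotonicity of the marginal utility $\pi(n,\theta) = -c_a + \kappa + \lambda e^{-(n+1)} + \theta$ in $n$ — the ``negative feedback'' that distinguishes this game — together with the two-sided bound on $\pi$ already obtained inside the proof of Lemma~\ref{lma:theta-bnds}. First I would localize: for $\theta$ strictly inside the interval $(c_a - \kappa - \lambda e^{-1},\, c_a - \kappa)$, the left inequality is equivalent to the upper bound of $\pi$ (attained at $n = 0$) being strictly positive, and the right inequality is equivalent to the lower bound of $\pi$ (its infimum, the $n\to\infty$ limit) being strictly negative. Thus $\pi(0,\theta) > 0 > \inf_n \pi(n,\theta)$, so by Definition~\ref{def:dom-strat} neither $a = 1$ nor $a = 0$ is strictly dominant on this interval, consistent with Lemma~\ref{lma:theta-bnds} placing all dominant-strategy equilibria in the complementary region.

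Next, since $\lambda > 0$ by Assumption~\ref{smp:lambda}, the map $n \mapsto \pi(n,\theta)$ is continuous and strictly decreasing and changes sign on $[0,\infty)$, so the intermediate value theorem yields a \emph{unique} $n^*$ with $\pi(n^*,\theta) = 0$, explicitly $n^* = -1 - \ln\!\big((c_a - \kappa - \theta)/\lambda\big)$, which lies in $(0,\infty)$ exactly because $\theta$ is in the open interval. I would then argue this point characterizes the equilibrium. When the (effective) number of foragers equals $n^*$, a robot's gain from switching is $u(1,n^*,\theta) - u(0,n^*,\theta) = \pi(n^*,\theta) = 0$, so it is indifferent and may randomize over $\{0,1\}$ — precisely the condition of Definition~\ref{def:mix-strat}. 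Conversely no other symmetric configuration is stable: if strictly fewer than $n^*$ robots forage then $\pi > 0$ and an idle robot strictly benefits from foraging, whereas if strictly more than $n^*$ forage then $\pi < 0$ and a forager strictly benefits from idling. Hence the Nash equilibrium is pinned down by $\pi(n,\theta) = 0$, and by strict monotonicity it is unique.

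The delicate step, and the one I expect to require the most care, is reconciling the integer-valued count of foragers with the \emph{equality} $\pi(n,\theta) = 0$: generically $n^*$ is not an integer, and indeed $n^* \to \infty$ as $\theta \uparrow c_a - \kappa$, so it can exceed $N - 1$. The cleanest resolution is to interpret the symmetric mixed strategy through its foraging probability $q$, so the relevant quantity is the expected number of other foragers $n = (N-1)q$, which sweeps $[0, N-1]$ continuously and equals $n^*$ for a unique $q^*$ as long as $n^* \le N-1$; for the statement to hold in full generality one then passes to the large-population (mean-field) limit, where the forager fraction is genuinely continuous and unbounded counts are not an issue. One must also be careful whether the indifference condition is ``$\mathbb{E}[\pi(n,\theta)] = 0$'' over a binomially distributed $n$ or ``$\pi$ evaluated at $\mathbb{E}[n]$,'' since $\pi$ is strictly convex in $n$ and Jensen's inequality separates the two; the theorem as stated corresponds to the latter, deterministic-count reading. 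Once that modeling choice is fixed, the remaining verification — continuity, strict monotonicity, the IVT application, and the no-profitable-deviation check of the previous paragraph — is routine.
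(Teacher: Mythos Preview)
Your proposal is correct and follows essentially the same route as the paper: a sign change of $\pi(\cdot,\theta)$ across $[0,\infty)$ plus continuity and strict monotonicity in $n$ give a unique $n^*$ with $\pi(n^*,\theta)=0$ via the intermediate value theorem, and since neither action is strictly dominant on the open interval the equilibrium must be mixed. Your write-up is in fact tidier than the paper's own proof --- you have the endpoint signs the right way round (the paper writes $\pi(0,\theta)<0$ and $\pi(\infty,\theta)>0$, which is backwards for a marginal utility that \emph{decreases} in $n$), and the integer-versus-expected-$n$ and Jensen caveats you raise are precisely the points the paper defers to the remark and mixed-strategy derivation immediately following the theorem.
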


\begin{proof}
    By construction, the marginal utility satisfies
    \begin{align*}
        \pi(0, \theta) < 0 \\
        \pi(\infty, \theta) > 0
    \end{align*}
    for all values of $\theta$ under the premise of Theorem \ref{thm:NE}.    
    By continuity of $\pi$ in $n$, there is some $n^*$ such that
    \begin{equation*}
        \pi(n^*, \theta) = 0.
    \end{equation*}
    The strict monotonicity of $u$ in $n$ implies that $n^*$ is unique, and thus $n^*$ corresponds to a unique Nash equilibrium.
    By definition, the equilibrium cannot be a strictly dominant strategy, and thus it must be a mixed strategy.
\end{proof}

We note that, while it appears counter-intuitive to have $n^*$ be a non-integer number of robots, this is a natural consequence of following a mixed-strategy.
As we discuss in the following section, $n^*$ is the \textit{expected value} for the number of robots to forage, which is a non-integer number in general.

\begin{remark} \label{rmk:limits}
    When the inequalities of Assumptions \ref{smp:kappa} and \ref{smp:lambda} are strict equalities, the game has mixed-strategy Nash equilibrium for values of $\theta$ satisfying,
    \begin{equation}
        \theta \in [0, 1].
    \end{equation}
\end{remark}

To improve the robustness of the system, the bounds of Remark \ref{rmk:limits} can be tightened by considering strict inequalities.
This implies a smaller set $\theta\in [\underline{\theta}, \overline{\theta}] \subset [0, 1]$, outside of which the strictly dominant Nash equilibrium exist.

The shape of our marginal utility function \eqref{eq:marginal-utility} is presented in Fig. \ref{fig:marginal-utility} for our proposed mechanism.
The payoff $\pi(n, \theta)$ is plotted against the signal $\theta$ for the case described by Remark \ref{rmk:limits}.
The top and bottom dashed lines correspond to the minimum and maximum value of $\pi(n, \theta)$ as described by Lemma \ref{lma:theta-bnds}, and the entire shaded region corresponds to values of $\theta$ where a mixed strategy Nash equilibrium exists.
Our proposed mechanism's marginal utility, depicted in Fig. \ref{fig:marginal-utility}, has a number of interesting implications for the behavior of the system.

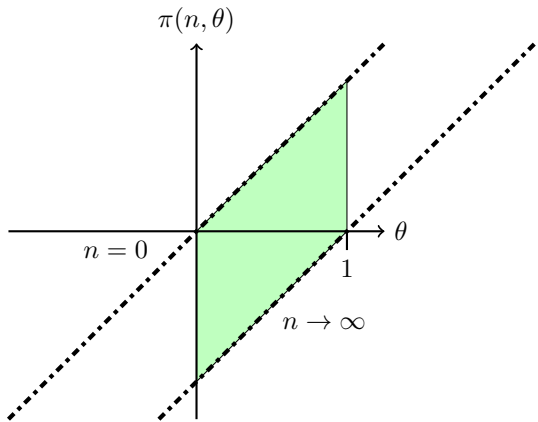
\begin{figure}[ht]
    \centering
    \begin{tikzpicture}
        \draw[fill=green!25] (0, 0) -- (0, -2) -- (2, 0) -- (2,2) -- cycle;
        \draw[thick,->] (-2.5, 0) -- (2.5, 0) node[right] {$\theta$};
        \draw[thick,->] (0, -2.5) -- (0, 2.5) node[above] {$\pi(n, \theta)$};
        \draw[ultra thick, dash dot] (-2.5, -2.5) -- (-0.5,-0.5) node[above left] {$n=0$} -- (2.5, 2.5);
        \draw[ultra thick, dash dot] (-2.5+2, -2.5) -- (1, -1) node[below right]{$n\to\infty$} -- (2.5+2, 2.5);
       \draw[thick] (2, 0) -- (2,-0.25) node[below]{$1$};
    \end{tikzpicture}
    \caption{The marginal utility of our proposed mechanism. Robots do not consider the actions of others when $\theta$ falls beyond the shaded green area.}
    \label{fig:marginal-utility}
\end{figure}

\begin{remark} \label{rmk:no-feedback}
    If the utility $u$ is independent of $n$, i.e., there is no feedback with respect to the number of robots, then the system has a unique $\theta^*$ where the system switches between the two dominant strategies.
\end{remark}

Remark \ref{rmk:no-feedback} is clear by substituting $\lambda = 0$ into the bounds of Lemma \ref{lma:theta-bnds}.
In this case the upper and lower bounds of $\pi(n, \theta)$ converge, and the parallel lines in Fig. \ref{fig:marginal-utility} become co-linear.
While this may seem like a trivial result, it demonstrates that having feedback with respect to the number of active robots in the marginal utility function \eqref{eq:marginal-utility} plays a critical role in regulating the behavior of the system.

\begin{remark} \label{rmk:cost}
    If $\theta(t)$ increases continuously, then making the cost $c_a$ heterogeneous between robots while fixing $\lambda$ and $\kappa$ implies that more capable robots will always begin foraging before less capable robots.
\end{remark}

The proof of Remark \ref{rmk:cost} comes from the fact that increasing the cost $c_a$ directly increases $\pi(n, \theta)$ by the same amount.
Let $\pi_1 < \pi_2$, where $\pi_1$ is a robot with a lower cost $c_a$ (i.e., it is more capable) and $\pi_1(0, 0) = 0$.
Then, there always exists some value of $\theta\in(0, 1)$ such that
\begin{equation}
    \pi_1(n, \theta) > 0 > \pi_2(n, \theta).
\end{equation}
Thus, as $\theta$ increases continuously, the marginal utility of $\pi_1$ will become positive before $\pi_2$, and the more capable robot will be assigned to the foraging task.

Remark \ref{rmk:no-feedback} emphasizes the importance of a feedback term where the number of foraging robots affects the marginal utility of all other robots.
Furthermore, Remark \ref{rmk:cost} demonstrates the our proposed approach has expected and intuitive behavior for systems with heterogeneous robots.
Finally, we present two results that describe the \textit{resilience} of our system with respect to the addition or removal of robots by a human.

\begin{lemma} \label{lma:recruit-any}
    When a human recruits $k < N$ robots for an external task, the upper threshold $\theta \leq \overline{\theta}$, where foraging becomes a strictly dominant strategy, decreases.
\end{lemma}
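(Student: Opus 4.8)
The plan is to track how the upper threshold in Lemma~\ref{lma:theta-bnds} depends on the number of robots contributing to the colony task. Recall that $\pi(n,\theta)$ is strictly decreasing in $n$, so within a pool of $m$ robots the marginal utility of any single robot is smallest when all of the other $m-1$ robots forage, i.e.\ at $n=m-1$; and, repeating the argument in the proof of Lemma~\ref{lma:theta-bnds}, foraging is a strictly dominant strategy exactly when this worst-case marginal utility is nonnegative. Setting $\pi(m-1,\theta)=0$ and solving gives the threshold
\[
    \overline{\theta}(m) \;=\; c_a - \kappa - \lambda e^{-m}.
\]

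First I would instantiate this with the full team ($m=N$) to recover the nominal threshold $\overline{\theta}(N)=c_a-\kappa-\lambda e^{-N}$. Next I would argue that after the human recruits $k\ge 1$ robots for the external task, only $N-k$ robots remain available to forage for the colony, so each such robot faces at most $N-k-1$ other foragers; the effective pool size thus drops from $N$ to $N-k$, and the upper threshold becomes $\overline{\theta}(N-k)=c_a-\kappa-\lambda e^{-(N-k)}$. Finally, since $\lambda>0$ by Assumption~\ref{smp:lambda} and $N-k<N$, we have $e^{-(N-k)}>e^{-N}$, hence $\overline{\theta}(N-k)<\overline{\theta}(N)$, which is the claim. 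Equivalently, and without ever writing the closed form: removing robots can only shrink the set of feasible values of $n$, so $\min_n \pi(n,\theta)$ increases pointwise in $\theta$, and the value of $\theta$ at which it first becomes nonnegative can only move left.

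I expect the only delicate point to be the bookkeeping in the second step: one has to justify that recruitment genuinely removes those $k$ robots from the set that can take the foraging action --- regardless of whether each recruited robot was previously idle or foraging --- so that $n=N-k-1$ is the correct post-recruitment worst case. The remaining content is a one-line comparison of the exponential term. I would also remark in passing that the lower threshold $\underline{\theta}$, where idling is strictly dominant, is unchanged, since it is determined by the configuration $n=0$, which is still feasible after recruitment.
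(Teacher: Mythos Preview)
Your proposal is correct and follows essentially the same idea as the paper: the upper threshold is determined by the minimum of $\pi(n,\theta)$ over the feasible range of $n$, and shrinking that range (by removing robots) raises this minimum pointwise, shifting the zero-crossing to the left. The paper argues this geometrically via Fig.~\ref{fig:marginal-utility} (the lower dashed line moves up when $n$ cannot reach $\infty$), whereas you make it explicit by computing $\overline{\theta}(m)=c_a-\kappa-\lambda e^{-m}$ and comparing $m=N$ to $m=N-k$; your closing remark that $\underline{\theta}$ is unaffected is a nice addition not stated in the paper.
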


\begin{proof}
The lower bound on $\pi(n, \theta)$ is approximated for a large swarm of robots where $n\to\infty$.
For any finite $N$, the lower bound on $\pi(n, \theta)$ is increased and brought closer to the upper bound.
This shifts the bottom dashed line of Fig. \ref{fig:marginal-utility} up, which moves its $y$-intercept to the left.
This corresponds to a decrease in the upper threshold $\overline{\theta}$.
\end{proof}

\begin{lemma} \label{lma:recruit-forage}
    When a human recruits a robot that is foraging, the marginal utility $\pi(n, \theta)$ increases for all robots.
\end{lemma}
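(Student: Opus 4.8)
The plan is to show that, from the viewpoint of every robot remaining in the colony, the human recruiting a foraging robot is equivalent to decrementing by one the count $n$ of \emph{other} foraging robots that each robot observes, and then to invoke the strict monotonicity of $\pi$ in $n$ that underlies our negative-feedback formulation.

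First I would fix the instant of recruitment and hold $\theta$ constant, so that the only quantity that changes is $n$. Let robot $j$ be the foraging robot that the human removes. For any robot $i \neq j$ still in the system, the number of \emph{other} robots it observes to be foraging drops from $n$ to $n-1$; if the removed robot happens to be robot $i$ itself, the claim is vacuous, since robot $i$ is no longer a player. Hence it suffices to compare $\pi(n-1, \theta)$ with $\pi(n, \theta)$.

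Second, I would appeal to the fact --- central to our mechanism and noted just before Lemma \ref{lma:theta-bnds} --- that $\pi(n, \theta)$ is strictly decreasing in $n$ (equivalently, that $u$ is strictly decreasing in $n$ per Remark \ref{rmk:utility}). This immediately gives $\pi(n-1, \theta) > \pi(n, \theta)$ for every remaining robot, idle or foraging. For the explicit mechanism \eqref{eq:marginal-utility} one can even quantify the jump,
\begin{equation*}
    \pi(n-1, \theta) - \pi(n, \theta) = \lambda\big(e^{-n} - e^{-(n+1)}\big) = \lambda e^{-(n+1)}(e - 1) > 0,
\end{equation*}
which is positive because $\lambda > 0$ by Assumption \ref{smp:lambda}, and which is independent of $\theta$; that is, the whole marginal-utility line in Fig. \ref{fig:marginal-utility} shifts vertically upward.

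The argument is short and I do not anticipate a genuine obstacle; the only point requiring care is the bookkeeping on what $n$ counts (``other'' foraging robots versus the total), together with noting that the conclusion holds uniformly over all robots that remain and all admissible $\theta$. In particular, this shifts both the $n=0$ and the $n\to\infty$ bounding lines of Fig. \ref{fig:marginal-utility} upward, which complements Lemma \ref{lma:recruit-any} and is precisely the source of the system's resilience: losing a forager raises every remaining robot's incentive to forage.
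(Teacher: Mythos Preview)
Your proof is correct and follows exactly the paper's approach: the paper's own proof is the single line ``Lemma~\ref{lma:recruit-forage} follows from $\pi(n,\theta)$ decreasing in $n$,'' and your argument is simply a more detailed elaboration of that monotonicity observation, with the explicit computation of the jump $\lambda e^{-(n+1)}(e-1)$ as a bonus. One small quibble with your closing commentary (not the proof itself): removing a forager does not shift the $n=0$ bounding line in Fig.~\ref{fig:marginal-utility}, since that upper bound is independent of how many robots are present; only the effective lower bound moves, and that is the content of Lemma~\ref{lma:recruit-any}, not this one.
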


\begin{proof}
Lemma \ref{lma:recruit-forage} follows from $\pi(n, \theta)$ decreasing in $n$.
\end{proof}

These results demonstrate the resilience of our global games formulation.
Lemma \ref{lma:recruit-any} implies that by removing some robots from the system, the threshold where the system tips to an ``all robots forage'' dominant strategy equilibrium decreases.
In other words, as the energy level approaches zero, more robots will begin to forage to compensate for those removed by the human.

Similarly, Lemma \ref{lma:recruit-any} implies that robots from the colony have an incentive to replace any foraging robots that are recruited by the human.
This points to an intuitive and resilient system behavior: send a higher fraction of robots to forage as the number of robots decreases.
This is in contrast to approaches using traditional global games, as we discuss next.

\begin{remark} \label{rmk:human-game}
    For a traditional global game where $\pi$ increases with $n$, Lemmas \ref{lma:recruit-any} and \ref{lma:recruit-forage} are inverted, i.e., recruiting robots raises the lower bound $\underline{\theta} \leq \theta$, and recruiting foraging robots reduces $\pi(n, \theta)$.
\end{remark}

Remark \ref{rmk:human-game} implies that using traditional global games for MRTA is particularly challenging.
First, as the number of robots is decreased, it becomes less likely that robots will forage while the energy levels are high.
Critically, if a human recruits too many robots that are foraging, the marginal utility $\pi(n, \theta)$ will decrease.
This could lead to a situation where, if too many foraging robots are removed, all remaining robots remain idle in the colony--leading to a catastrophic cascading failure of the entire system!

Finally, we derive the mixed strategy that determines whether an individual robot should forage or remain idle in the colony.
To ensure the robots have sufficient time to forage (i.e., robots don't rapidly switch between foraging and idling), we implement \textit{action hysteresis}, where a foraging robot will continue foraging until it finds and returns an energy source to the next.
Meanwhile, all idle robots compute their optimal action via \eqref{eq:marginal-utility} using the current values of $s(t)$ and $n$.
The behavior of the robots is summarized as a switching system in Fig. \ref{fig:behavior}.

\begin{figure}[ht]
    \centering
    \begin{tikzpicture}[
    state/.style={draw, circle, minimum width=2cm, ultra thick,fill=red!10},
    ]
        \node[state] at (-2, 0) {Idle};
        \node[state] at (2, 0) {Forage};
        \draw[->, ultra thick] 
        (-2, 1) -- 
        node[above] {$\pi\big(n, \theta(t)\big) > 0$} 
        (2,1);
        \draw[->, ultra thick] 
        (2, -1) -- 
        node[below] {return with energy source} 
        (-2,-1);
    \end{tikzpicture}
    \caption{Switching system that describes the behavior of the robots. Each robot is idle until the marginal utility of foraging is positive, and they continue to forage until they return with an energy source.}
    \label{fig:behavior}
\end{figure}
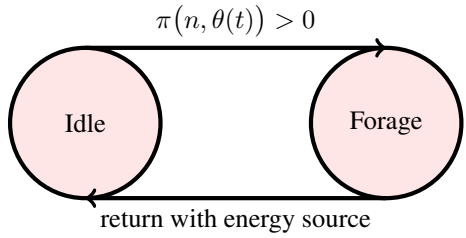

Let $n^*$ and $\theta^*$ be the current number of foraging robots and energy signal value, respectively.
If $\pi(n^*, \theta^*) \leq 0$ because $n^*$ is sufficiently large, then all idle robots remain idle.
Otherwise, if $\pi(n^*, \theta^*) > 0$, each idle robot randomly chooses to begin foraging with probability $p$.
The probability of any individual robot switching to the foraging behavior is a Bernoulli random variable, and thus the probability of of any $k$ robots foraging follows a Binomial distribution.
The expected number of robots $n'$ that start foraging is,
\begin{equation}
    \mathbb{E}\Big[n'\Big] = (N - n^*)p.
\end{equation}
After all robots select their action, the expected total number of foraging robots is,
\begin{equation*}
    \mathbb{E}\big[n\big] = n^* + \mathbb{E}\big[n'\big] = n^* + (N - n^*)p = N\,p + n^*(1-p).
\end{equation*}
Substituting this into the marginal utility \eqref{eq:marginal-utility} yields,
\begin{equation}
    \mathbb{E}\big[\pi(n, \theta)\big] = -c_a + \kappa + \lambda\,e^{-(N\,p + n^*(1-p)+1)} + \theta.
\end{equation}
By definition, setting $\mathbb{E}\big[\pi(n, \theta)\big] = 0$ and solving for $p$ defines the mixed Nash equilibrium.
\begin{equation}
    p = \frac{-ln\big(\frac{c_a - \kappa - \theta}{\lambda}\big) - n^* - 1}{N - n^*}.
\end{equation}
Note that this yields an expected number of robots,
\begin{equation} \label{eq:expected-n}
    \mathbb{E}\big[n\big] = -ln\Big(\frac{c_a - \kappa - \theta}{\lambda}\Big) - 1,\,
\end{equation}
which satisfies the Nash equilibrium of Theorem \ref{thm:NE}.
Note that the natural logarithm in \eqref{eq:expected-n} comes from the exponential term in \eqref{eq:utility}.
Selecting a different monotonic function of $n$ in the utility function \eqref{eq:utility}, e.g., $ln(n+1)$ or linear in $n$, influences the rate at which robots begin foraging.

\section{Simulation Results} \label{sec:sim}

\begin{figure*}[ht]
    \centering
    \includegraphics[width=0.45\linewidth]{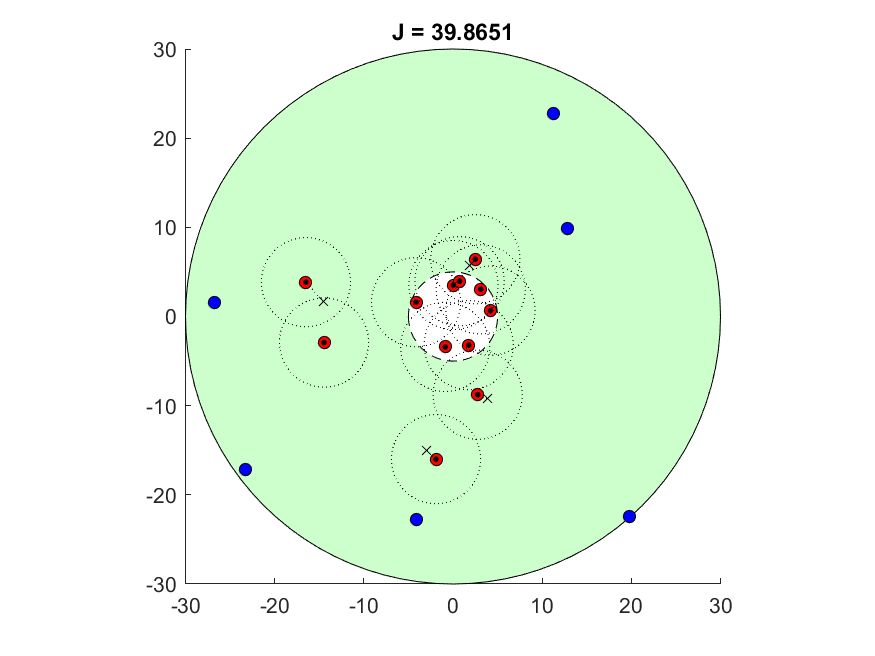}
    \includegraphics[width=0.45\linewidth]{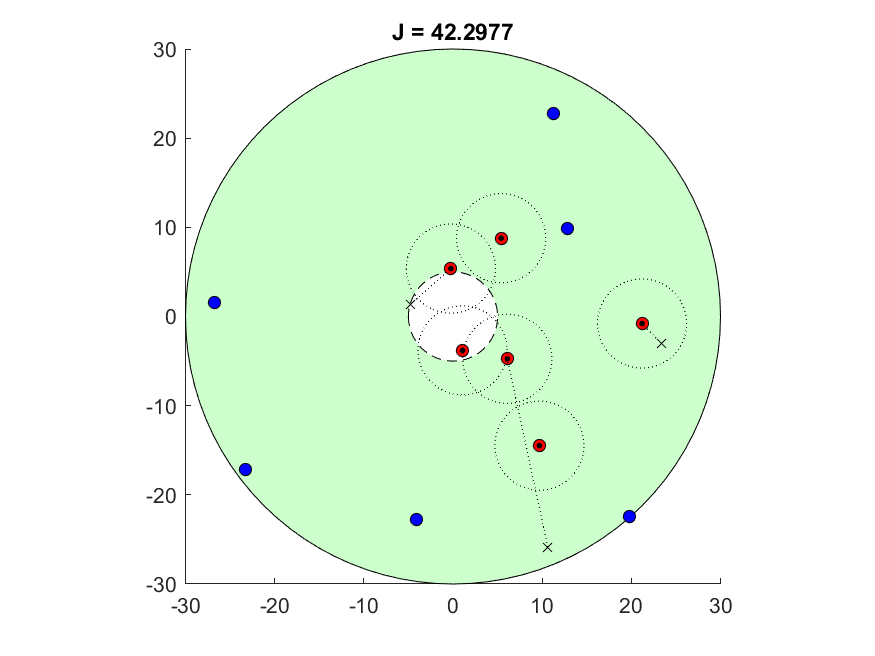}
    \caption{Snapshots of the simulation environment at $t = 800$ and $t=901$, just before and after $6$ of the $12$ robots are removed from the domain. Red and blue circles are robots, dashed circles denote the sensing radius, and straight lines point from a robot to its stored memory location.}
    \label{fig:snapshots}
\end{figure*}

We demonstrate the performance of our system in a simulation environment inspired by \cite{Krieger2000} and developed in Matlab\footnote{The simulation code is available at: \url{https://github.com/int-sys-lab/foraging-global-games}
}.
We also present a qualitative comparison between our approach and related MRTA solutions in Table \ref{tab:comparison}.
The key factors are how frequently agents are assigned to tasks (one-shot or repeated), the computational complexity of the assignment algorithm, and in what sense, if any, the resulting assignment is optimal.

\begin{table}[ht]
    \centering
    \begin{tabular}{c|cccc}
                 & Frequency & Complexity & Optimality  \\ \toprule
        Proposed & Repeated  & Linear     & Nash \\
        Kriger et al.   & Repeated  & Constant   & Heuristic \\
        Kanakia et al. & One-Shot  & Constant   & Heuristic \\
        MILP     & Repeated  & NP-Hard    & Global
    \end{tabular}
    \caption{Qualitative comparison between assignment approaches. MILP is a general mixed-integer linear program; Nash refers to the Nash equilibrium, which is globally optimal for convex problems. }
    \label{tab:comparison}
\end{table}

The colony is centered at the origin with a radius of 3 meters, and the entire domain has a radius of 30 meters.
We consider a team of $N=12$ robots, and we place $N=6$ energy sources randomly in the domain.
To avoid clumping, each energy source is placed within a $60^{\circ}$ slice of the domain at a random radius.
As with \cite{Krieger2000}, energy sources are immediately replaced after being picked up by a robot.

The colony loses energy at a rate of $0.1$ units/second through trickle losses, robots consume energy at a rate of $0.01$ units/second while moving, and each energy source provides $5$ units of energy.
This implies that it is ``cheap'' for the robots to forage, as they drain energy an order of magnitude slower than the colony itself.
We consider a total energy level of $J = 100$ units.
This ensures that the colony will entirely run out of energy in $1,000$ seconds or less, and returning an energy source only restores $5\%$ of the total energy.
Finally, we start the colony at $J=50$ units of energy to avoid an initial delay while $\theta$ is close to zero.

The robot behaviors are similar to those observed in \cite{Krieger2000}.
An idle robot $i$ attempts to remain stationary by applying $\bm{v}_i = 0$.
This ensures that it does not drain energy while waiting to forage.
Each robot that is foraging follows two behaviors.
(i) If the robot does not sense any energy sources within its sensing horizon $h > 0$, it moves to a coordinate stored in its memory.
We select this coordinate to create a random walk for the robot, which we discuss in the following paragraph;
(ii) if the robot senses an energy source within its sensing horizon, it moves to the energy source, picks it up, and returns it to the colony.

Each robot has onboard memory that is capable of storing a single coordinate in $\mathbb{R}^2$.
We use the memory of each robot to (i) return to the (noisy) last known location of energy sources, and (ii) perform a random walk.
Similar to \cite{Krieger2000}, if the robot returns an energy source to the colony and becomes idle, we clear the coordinate stored in memory.
Otherwise, if the robot continues foraging, then we store the original location of the energy source in memory and disturb it with zero-mean Gaussian noise scaled by the distance between the colony and the energy source.
This emulates the stochasticity of using odometry to return to the energy source location in \cite{Krieger2000}.
If the memory of a robot is empty, it picks a random location on the edge of its sensing radius $h$ to store in memory. 
Once the robot reaches the location in its memory, it generates another random point at the edge of its sensing horizon.
This generates a random walk through the domain with a step size of $h$ m; we use $h=5$ m for this simulation.
Once the robot identifies an energy source, it stops the random walk, picks up the energy source, and returns to the colony as described above.

Finally, to ensure safety, we use a linear control barrier function (CBF) to enforce a minimum separating distance between the robots \cite{Notomista2019Constraint-DrivenSystems} and to keep the robots within a $30$ m radius of the colony .
Thus, each robot with index $i$ solves the optimization problem,
\begin{align*}
    \min_{\bm{u}_i} &|| \bm{u}_i - \bm{u}_i^d|| \\
    \text{subject to:}& \\
    2||\bm{v}_i - v_{\max}\frac{\bm{p}_i - \bm{p}_j}{||\bm{p}_i - \bm{p}_j||}|| &\geq (R_i + R_j)^2 - ||\bm{p}_i - \bm{p}_j||^2,
\end{align*}
where $\bm{u}^d$ is the desired action as described above and $R_i, R_j$ are the collision radii of robots $i$ and $j$.
Note that this has a straightforward extension to static and dynamic obstacle avoidance.
As a consequence, the foraging robots can occasionally experience congestion while leaving the colony site.
Furthermore, the CBF may cause an idle robot to move out of the way of a foraging robot, consuming energy.

We simulate the system over a time of $1800$ seconds, and we found that the utility parameters $c_a = 1, \kappa = 0.25$, $\lambda = 5.5$ gave a reasonable profile for the expected number of robots \eqref{eq:expected-n} as a function of $\theta$.
To demonstrate the resilience of our system, half of the $N=12$ robots are recruited by a human and removed from the system at $t=900$.
The state of the system just before and after this event is presented in Fig. \ref{fig:snapshots}.
The green annulus represents the domain where energy sources are initialized, the white disk at the origin is the colony, and the red and blue circles are the robots and energy sources, respectively.
Note that at $t=800$, there are $7$ idle robots in the colony.
At $t=900$ we randomly select $6$ of the robots to remove; as described by Lemma \ref{lma:recruit-forage}, this increases the marginal benefit of foraging, and only one idle robot remains in the colony at $t=901$.

The energy of the system is depicted in Fig. \ref{fig:energy} over the $1800$ second time interval.
The red curve shows the current energy stored in the central colony; to generate the black curve, we integrate the energy cost of the foraging robots through time.
When a foraging robot returns to the colony (i.e., when the black curve moves upward), the colony energy is decreased by an amount equal to the energy consumed by the robot.
This is analogous to the robots using energy to navigate then recharging at the colony.
The blue vertical line at $t = 900$ corresponds to half the robots being removed from the domain.
This appears to enhance the performance of the system by reducing congestion.

\begin{figure}[ht]
    \centering
    \includegraphics[width=0.95\linewidth]{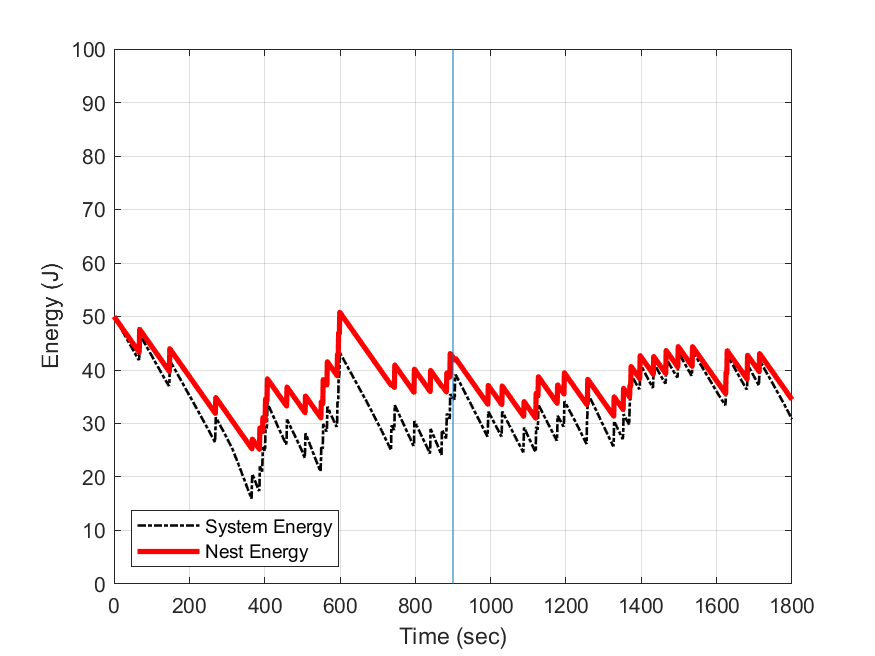}
    \caption{Energy of the colony (red) and the total energy $J$ (black, dashed). The blue vertical line is the instant where half the robots are removed from the system.}
    \label{fig:energy}
\end{figure}

Next, we present the number of robots foraging in Fig. \ref{fig:num-robots}.
The light blue shaded area represents the Nash equilibrium calculated with \eqref{eq:expected-n}, and the black curves show the number of robots foraging at each time step.
In general, the simulation results tend to have more robots foraging than the Nash equilibrium dictates.
This is because of \textit{hysteresis} in the foraging behavior (Fig. \ref{fig:behavior}), where a robot cannot stop foraging until it returns with an energy source.
Thus, if fewer robots than the Nash equilibrium begin to forage at a given time step, there is a non-zero probability that additional robots will begin to forage during subsequent time steps.
However, if more robots than the Nash equilibrium begin to forage, there is no way for them to switch to the idle behavior until they return to the colony with an energy source.
We expect that adding stochasticity, e.g., noisy measurements of the signal $\theta(t)$, will exacerbate this effect.

\begin{figure}[ht]
    \centering
    \includegraphics[width=0.95\linewidth]{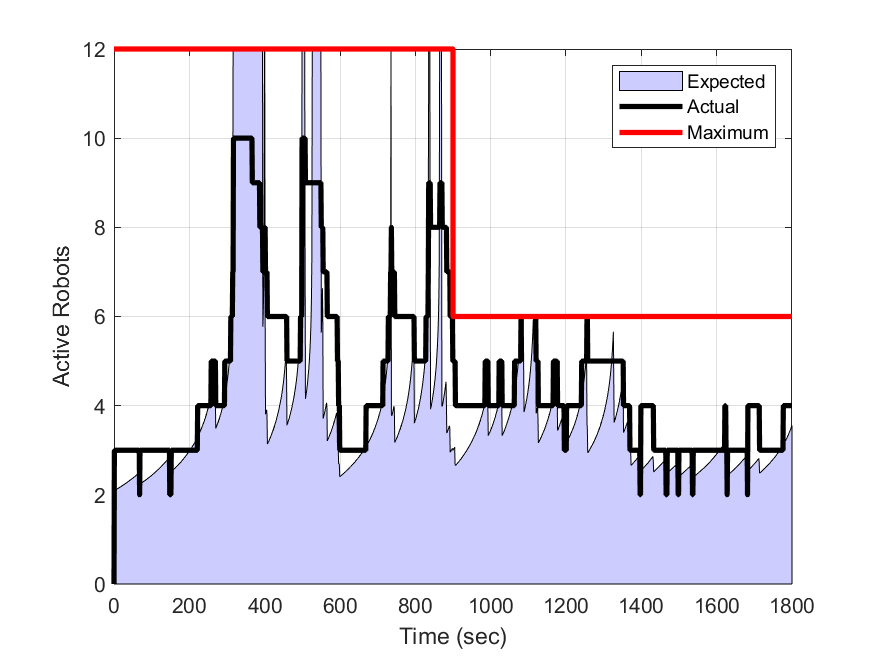}
    \caption{The expected, actual, and maximum number of robots foraging during the simulation.}
    \label{fig:num-robots}
\end{figure}

\section{Conclusions} \label{sec:conclusion}

In this work, we have explored how global games can be used to solve the multi-robot task allocation problem.
We discussed the idea of positive and negative feedback in global games, and demonstrated how negative feedback can stabilize the task assignment for a dynamic system where the game is repeatedly played.
In contrast, global games with positive feedback are more appropriate for zero-shot games, which does not provide the desirable resilience and robustness properties of our solution.
We demonstrated the capability of our approach to reach the Nash equilibrium using a mixed strategy, and we demonstrated its resilience to the removal of random robots by a human.

There are several compelling directions for future research.
Exploring the impact of robot heterogeneity on system performance is one area that is under-explored.
Including multiple global signals that the robots must maintain (e.g., foraging for energy sources and surveying an area) is an active research direction.
Designing the parameters of the reward function to optimize system performance is another compelling direction, as is learning optimal utility functions and/or signal dynamics from data.
Finally, identifying a biologically feasible description for an ecological system, such as hive site selection in bees, is an intriguing research topic.

\bibliography{bib/mendeley,bib/vsgc,bib/ieeexplore}

\end{document}